\documentclass{article}

\usepackage[preprint]{neurips_2026}

\usepackage[utf8]{inputenc}
\usepackage[T1]{fontenc}
\usepackage{hyperref}
\usepackage{url}

\usepackage{booktabs}
\usepackage{microtype}
\usepackage{graphicx}
\usepackage{subcaption}
\usepackage{pifont}
\usepackage{enumitem}
\usepackage{amsmath}
\usepackage{amssymb}
\usepackage{mathtools}
\usepackage{amsthm}
\usepackage{algorithm}
\usepackage[noend]{algpseudocode}
\usepackage[most]{tcolorbox}
\tcbuselibrary{breakable, listings}
\usepackage{colortbl}
\usepackage{float}
\usepackage[capitalize,noabbrev]{cleveref}

\theoremstyle{plain}
\newtheorem{theorem}{Theorem}[section]
\newtheorem{proposition}[theorem]{Proposition}

\theoremstyle{definition}

\theoremstyle{remark}

\definecolor{studentbg}{HTML}{f8cecc}
\definecolor{teacherbg}{HTML}{d5e8d4}

\title{Self-Supervised On-Policy Distillation\\ for Reasoning Language Models}

\author{%
  Zhiquan Tan\\
  Tsinghua University\\
  \And
  Yinrong Hong\\
  Beihang University\\
}
\begin{document}

\maketitle

\begin{abstract}
GRPO-style RLVR trains reasoning models from multiple on-policy attempts per prompt, but typically uses these attempts only through terminal rewards. We show that a mixed group contains a richer process signal: a correct completion is a self-generated witness of how the current policy can solve the problem, while a wrong completion provides on-policy prefixes where the policy needs correction. We introduce \emph{Self-Supervised On-Policy Distillation} (SSOPD), which distills a teacher distribution conditioned on the shortest correct completion into prefixes of the longest wrong completion. This converts intra-group correct--wrong contrast into dense process supervision without external solution traces. A stopping-time view motivates the shortest-correct / longest-wrong rule as a finite-group approximation to editing persistent failures toward fast-success actions, and a prompt-level frontier weight concentrates the auxiliary loss where correct and wrong branches coexist. Across AIME 2024, AIME 2025, and HMMT 2025, SSOPD improves over GRPO in all nine model-benchmark settings. On Qwen3-8B, it reaches a macro Avg@12 of \(65.6\), outperforming GRPO by \(1.6\) points and the solution-conditioned OPSD baseline by \(0.8\) points. Code will be released at \url{https://github.com/tzq1999/SSOPD}.
\end{abstract}

\section{Introduction}

Verifiable reasoning has made it possible to train language models with very little process annotation: sample several solutions, verify their final answers, and reinforce the successful ones. This recipe is particularly effective for mathematical reasoning, where the final answer can often be checked reliably even when the reasoning trace is long and diverse~\citep{wei2022chain,cobbe2021training,hendrycks2021measuring}. Its strength, however, is also its limitation. GRPO-style RLVR methods~\citep{shao2024deepseekmath,guo2025deepseek,yu2025dapo} are fully on-policy, but they compress each sampled reasoning trajectory into a terminal reward. A trajectory may contain many consequential intermediate decisions; after verification, most of that process information is no longer used directly.

Dense supervision offers a complementary advantage. Supervised fine-tuning and distillation can shape token-level behavior and provide local training targets~\citep{guha2025openthoughtsdatarecipesreasoning,xiao2026mimov2flashtechnicalreport,hinton2015distillingknowledgeneuralnetwork,agarwal2024policy,lu2025onpolicydistillation}. Yet this density usually comes from outside the on-policy training loop: reference solutions, privileged traces, or a separate teacher. This creates an information-budget gap. Outcome-based RL is scalable and on-policy but sparse; trace-based supervision is dense but relies on additional information. The question we ask is whether dense process supervision can be extracted from the on-policy samples themselves.

Our key observation is that a GRPO group is already a small empirical reasoning tree for the current policy. The same model attempts the same prompt multiple times, and a mixed group contains a useful contrast: a correct completion demonstrates that the current policy can solve the prompt, while an incorrect completion exposes prefixes that the policy actually visits before failing. Standard GRPO uses this contrast only through group-relative scalar advantages. We instead view it as an opportunity for self-supervision: use the successful attempt as a witness, and use the failed attempt to identify where local guidance should be applied.

\begin{figure*}[t]
\centering
\includegraphics[width=\linewidth]{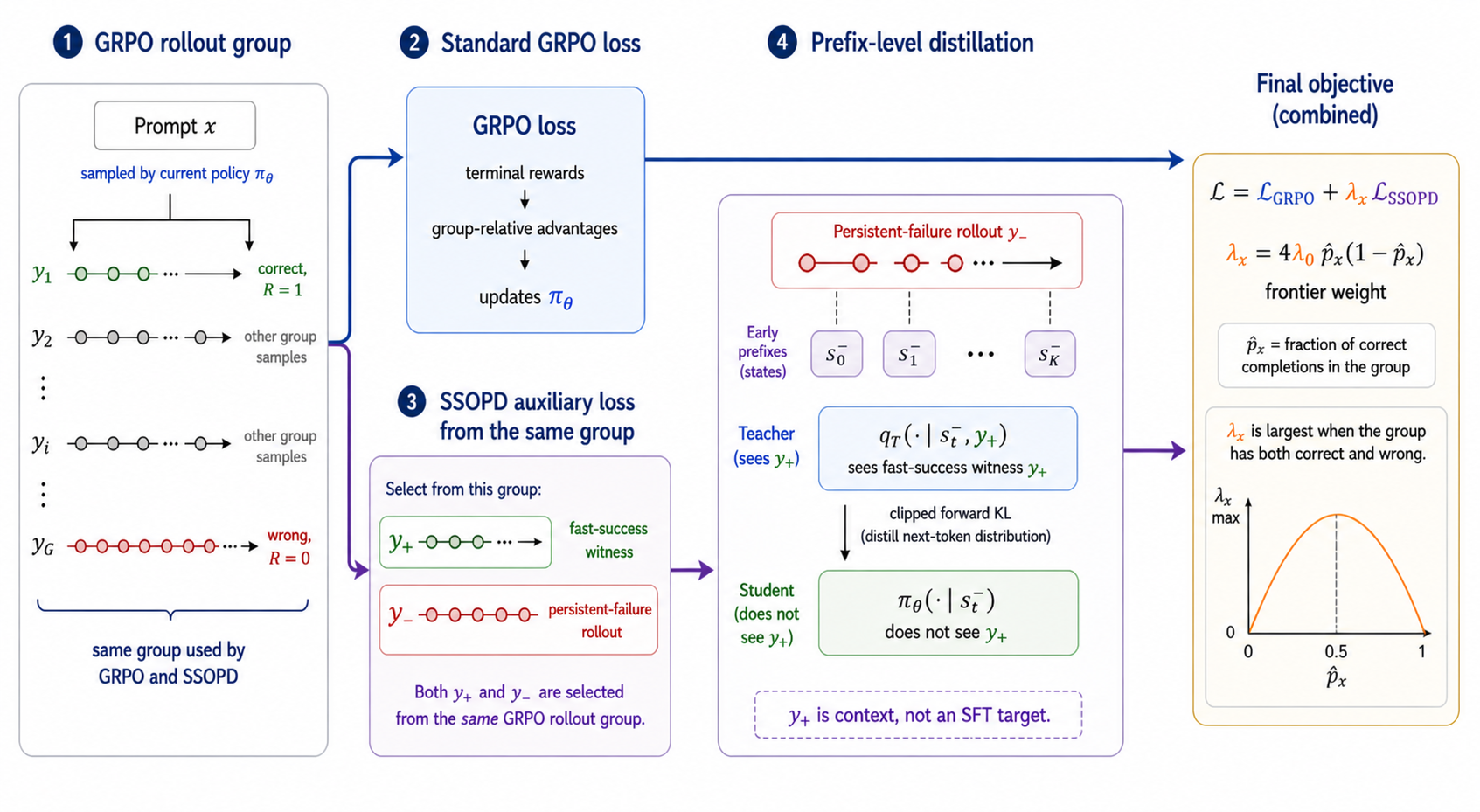}
\caption{\textbf{SSOPD training pipeline.} A GRPO group provides both successful and failed on-policy completions. SSOPD selects a self-generated successful witness, applies a teacher distribution at prefixes of a failed completion, and distills this local distribution into the student with a prompt-level frontier weight.}
\label{fig:mainfig}
\end{figure*}

We instantiate this idea as \emph{Self-Supervised On-Policy Distillation} (SSOPD), shown in \Cref{fig:mainfig}. At a failed prefix, the student is evaluated exactly as it would be at deployment: conditioned only on the prompt and its current prefix. A stop-gradient teacher is evaluated at the same prefix, but is additionally given a successful completion sampled by the same policy for the same prompt. The successful completion is not copied as a target sequence. It serves as privileged hindsight for constructing a local next-token distribution at an on-policy state. In this way, SSOPD converts intra-group correct--wrong contrast into dense distillation without external solution traces or a stronger model.

The method uses a deliberately simple pairing rule. SSOPD selects the shortest correct completion as the successful witness and edits prefixes of the longest wrong completion. We justify this rule through a stopping-time view: under a random computational deadline, shorter correct completions are stronger empirical witnesses of fast success, while longer wrong completions more clearly identify persistent failure routes. A prompt-level frontier coefficient further concentrates the auxiliary loss on groups where correct and wrong branches coexist, which is precisely when the group contains usable self-supervised contrast.

Our experiments separate methods by information budget. SFT and OPSD use additional solution information, whereas GRPO and SSOPD use only answer verification and on-policy samples in the update. Under this non-privileged comparison, SSOPD improves over GRPO in all nine model-benchmark settings across Qwen3-1.7B, 4B, and 8B. On Qwen3-8B, SSOPD reaches a macro Avg@12 of \(65.6\), outperforming GRPO by \(1.6\) points and also exceeding the solution-conditioned OPSD baseline by \(0.8\) points. At smaller scales, OPSD remains stronger overall, suggesting that external solution-conditioned teachers are most helpful when the model's own successful witnesses are less reliable.

Our main contributions are summarized below:
\begin{itemize}[leftmargin=*, itemsep=0pt, parsep=0pt]
  \item We propose SSOPD, a self-supervised distillation objective that turns correct--wrong contrast inside an on-policy GRPO group into dense prefix-level supervision.
  \item We provide a local node-editing analysis and a stopping-time interpretation that motivate the shortest-correct / longest-wrong pairing rule and the frontier weighting coefficient.
  \item We evaluate SSOPD on three competition-level math benchmarks and three Qwen3 scales, showing consistent gains over GRPO without additional solution traces and competitive performance against solution-conditioned distillation at 8B.
\end{itemize}

\section{Background}
\label{sec:opsd_background}

We now fix notation and review the two ingredients used by SSOPD. GRPO supplies on-policy groups of attempts together with terminal verifier rewards. On-policy and student-distribution-aware distillation provide a way to construct dense next-token targets at model-visited prefixes~\citep{agarwal2024policy, lu2025onpolicydistillation, xuspeculative, zhao2026self}. SSOPD combines these ideas while changing the source of privileged context: the context comes from another completion in the same group rather than from an external solution trace.

We use one notation throughout the background and method. For prompt $x$, a completion is $y=(a_0,\ldots,a_{\tau(y)-1})$, its prefix state is $s_t(y)=(x,y_{<t})$, and the verifier returns $R(y)\in\{0,1\}$. The behavior policy that samples rollouts is $\pi$, the optimized student is $\pi_\theta$, and the stop-gradient or frozen teacher is $\pi_{\bar\theta}$.

\subsection{Group Relative Policy Optimization}
\label{sec:background_grpo}

Group Relative Policy Optimization (GRPO) is an RLVR algorithm introduced with DeepSeekMath~\citep{shao2024deepseekmath}. For each prompt $x$, it samples a group
\[
\mathcal G(x)=\{y_i\}_{i=1}^G,
\qquad
 y_i\sim\pi(\cdot\mid x),
\]
and computes binary rewards $r_i=R(y_i)$. Let
\[
\bar r=\frac1G\sum_{j=1}^G r_j,
\qquad
\sigma_r^2=\frac1G\sum_{j=1}^G (r_j-\bar r)^2.
\]
GRPO uses the group-normalized advantage
\begin{equation}
A_i=\frac{r_i-\bar r}{\sigma_r+\epsilon_r},
\label{eq:grpo_advantage}
\end{equation}
where $\epsilon_r$ is a small stabilizer. For token $a_{i,t}$ at prefix $s_t(y_i)$, define the policy ratio
\[
\omega_{i,t}(\theta)
=
\frac{\pi_\theta(a_{i,t}\mid s_t(y_i))}{\pi(a_{i,t}\mid s_t(y_i))}.
\]
The denominator is the behavior, or old, policy that generated the sampled group.
Up to the usual sign convention for minimization versus maximization, the token-averaged GRPO loss is
\begin{equation}
\mathcal L_{\mathrm{GRPO}}(x)
=
\frac1G\sum_{i=1}^G \frac1{\tau(y_i)}
\sum_{t=0}^{\tau(y_i)-1}
\left(
-\min\!\left[
\omega_{i,t}(\theta)A_i,
\operatorname{clip}(\omega_{i,t}(\theta),1-\epsilon,1+\epsilon)A_i
\right]
+
\beta\,\mathcal K_{i,t}
\right),
\label{eq:grpo}
\end{equation}
where $\mathcal K_{i,t}$ denotes the reference-policy KL penalty at the same prefix. GRPO is a compact group-based RLVR objective: the group mean and variance turn terminal verifier outcomes into relative advantages, and the clipped ratio controls the size of the policy update. This makes it a natural on-policy backbone for adding auxiliary process-level supervision.

\subsection{Privileged On-Policy Self-Distillation}
\label{sec:background_opsd}

Privileged on-policy self-distillation~\citep{zhao2026self} keeps the student update at an on-policy prefix while constructing a dense teacher distribution from additional training-time context. At a prefix state $s$, let $z$ denote the privileged information. The teacher and student next-action distributions are
\[
q_T(a\mid s,z)
=
\pi_{\bar\theta}\!\left(a\mid \operatorname{Hint}(s,z)\right),
\qquad
p_\theta(a\mid s)=\pi_\theta(a\mid s).
\]
In our implementation, the local distillation loss is the pointwise-clipped forward KL
\begin{equation}
\ell_{\mathrm{OPSD}}(s,z)
=
\sum_a
\operatorname{clip}_{\tau_{\mathrm{clip}}}
\left(
q_T(a\mid s,z)
\left[
\log q_T(a\mid s,z)-\log p_\theta(a\mid s)
\right]
\right),
\label{eq:local_opsd}
\end{equation}
where $\operatorname{clip}_{\tau_{\mathrm{clip}}}(u)=\min(u,\tau_{\mathrm{clip}})$ is applied to each action-level summand before summing over the vocabulary, and we use $\tau_{\mathrm{clip}}=0.05$ unless otherwise stated. Here $s$ remains a prefix produced by the current behavior policy, while $z$ is used only to form the teacher distribution during training.

\section{Self-Supervised On-Policy Distillation}
\label{sec:ssopd}
\label{sec:method}

A GRPO group is more than a collection of terminal rewards. When the same policy attempts the same prompt several times, the group often contains both evidence of how the model succeeds and states from which it fails. SSOPD turns this contrast into a local training signal. A correct completion is used as privileged context for a stop-gradient teacher, while prefixes of a wrong completion remain the states at which the student is trained. The student never sees the correct completion at inference time; it only learns the next-token distribution that the teacher would prefer at the failing prefix.

The construction follows three steps. First, we describe what a good local edit should do at a prefix. Second, we explain how an on-policy successful completion can approximate such an edit through privileged distillation. Third, we choose the successful and failed completions by a stopping-time criterion and weight the resulting loss by how much correct--wrong contrast the group contains.

\subsection{From GRPO Groups to Local Edits}

Fix a prompt $x$. Let $\pi$ be the behavior policy that samples the GRPO group and let $\pi_\theta$ be the student policy being optimized. A generated completion is a path
\[
Y=(A_0,A_1,\ldots,A_{\tau(Y)-1}),
\qquad
\tau(Y)\le H,
\]
where $\tau(Y)$ is the stopping time and the verifier returns a binary reward $R(Y)\in\{0,1\}$. We write the prefix state at step $t$ as
\[
s_t(y)=(x,y_{<t}).
\]

For each prompt, GRPO samples
\[
\mathcal G(x)=\{y_i\}_{i=1}^G,
\qquad
 y_i\sim\pi(\cdot\mid x),
\]
and splits the group into correct and wrong completions,
\[
\mathcal C(x)=\{y_i\in\mathcal G(x):R(y_i)=1\},
\qquad
\mathcal W(x)=\{y_i\in\mathcal G(x):R(y_i)=0\}.
\]
GRPO uses the rewards in this group to form terminal advantages. SSOPD asks a different question: can the successful members of the group provide dense guidance at prefixes visited by the failed members?

To see what such guidance should look like, consider one prefix $s$ and temporarily ignore finite-sample constraints. Let $Y\sim\pi(\cdot\mid s)$ be a continuation from $s$, and let $\phi_s(Y)\ge0$ score how desirable the completed continuation is. Define
\[
V_\phi^\pi(s)=\mathbb E_\pi[\phi_s(Y)\mid s],
\qquad
Q_\phi^\pi(s,a)=\mathbb E_\pi[\phi_s(Y)\mid s,a].
\]
The natural next-action target is the behavior policy reweighted by the desirability of each action's future:
\begin{equation}
q_\phi(a\mid s)
=
\frac{\pi(a\mid s)Q_\phi^\pi(s,a)}{V_\phi^\pi(s)}.
\label{eq:desirability_posterior}
\end{equation}
This posterior keeps the support of the behavior policy, but shifts mass toward actions whose continuations score better. The following local result formalizes why such a posterior is the right object to imitate.

\begin{theorem}[Local improvement by reweighting good futures]
\label{thm:desirability_edit}
Fix a prefix state $s$ and a nonnegative score $\phi_s$ with $V_\phi^\pi(s)>0$. Define a local edit
\[
\pi_\eta(a\mid s)=(1-\eta)\pi(a\mid s)+\eta q_\phi(a\mid s),
\qquad 0<\eta<1,
\]
leaving all other states unchanged. Then
\[
V_\phi^{\pi_\eta}(s)-V_\phi^\pi(s)
=
\eta
\frac{
\operatorname{Var}_{a\sim\pi(\cdot\mid s)}[Q_\phi^\pi(s,a)]
}{V_\phi^\pi(s)}
\ge0.
\]
\end{theorem}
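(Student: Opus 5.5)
The approach is to exploit the fact that the edit is purely local: only the next-action distribution at $s$ changes, while every state strictly after $s$ keeps the behavior policy $\pi$. Consequently the action-values $Q_\phi^{\pi_\eta}(s,a)$ coincide with $Q_\phi^\pi(s,a)$ for every action $a$. This uses that $\phi_s$ is a fixed function of the completed continuation from $s$, and that continuations after taking $a$ never revisit the prefix $s$ itself, since prefixes strictly grow in length. I would state this observation first, because it is the only place where "leaving all other states unchanged" enters. It is also the step that needs care: one must make sure the score and the definition of $Q$ depend on the policy only through the future distribution of $Y$ given $(s,a)$.

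Given that observation, I would write the edited value by conditioning on the first action:
\[
V_\phi^{\pi_\eta}(s)=\sum_a \pi_\eta(a\mid s)\,Q_\phi^\pi(s,a)
=(1-\eta)V_\phi^\pi(s)+\eta\sum_a q_\phi(a\mid s)\,Q_\phi^\pi(s,a).
\]
Here I use $\sum_a\pi(a\mid s)Q_\phi^\pi(s,a)=V_\phi^\pi(s)$, which is the tower property. Subtracting $V_\phi^\pi(s)$ gives
\[
V_\phi^{\pi_\eta}(s)-V_\phi^\pi(s)=\eta\Bigl(\sum_a q_\phi(a\mid s)Q_\phi^\pi(s,a)-V_\phi^\pi(s)\Bigr).
\]

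Next I substitute the definition \eqref{eq:desirability_posterior}. First I note that $q_\phi$ is a valid distribution: it is nonnegative because $\phi_s\ge0$, and it sums to one by the tower property again, with $V_\phi^\pi(s)>0$ ensuring well-definedness. Then
\[
\sum_a q_\phi(a\mid s)Q_\phi^\pi(s,a)=\frac{\mathbb E_{a\sim\pi}[Q_\phi^\pi(s,a)^2]}{V_\phi^\pi(s)},
\]
so the bracket equals
\[
\frac{\mathbb E_{a\sim\pi}[Q^2]-(\mathbb E_{a\sim\pi}[Q])^2}{V_\phi^\pi(s)}
=\frac{\operatorname{Var}_{a\sim\pi(\cdot\mid s)}[Q_\phi^\pi(s,a)]}{V_\phi^\pi(s)}.
\]
This uses $V_\phi^\pi(s)=\mathbb E_{a\sim\pi}[Q]$. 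Nonnegativity follows from the variance being nonnegative together with $V_\phi^\pi(s)>0$ and $\eta>0$.

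The computation itself is routine. The main obstacle is the invariance claim $Q_\phi^{\pi_\eta}=Q_\phi^\pi$, and I would settle it by the prefix-length argument above.
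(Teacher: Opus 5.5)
Your proposal is correct and follows the paper's proof essentially step for step: you condition on the first action at $s$ using that the edit leaves $Q_\phi^\pi(s,\cdot)$ unchanged, substitute the posterior to obtain $\mathbb E_{a\sim\pi}[Q_\phi^\pi(s,a)^2]/V_\phi^\pi(s)$, and subtract $V_\phi^\pi(s)$ to get the variance term. Your prefix-length justification of $Q_\phi^{\pi_\eta}=Q_\phi^\pi$ and your check that $q_\phi$ is a valid distribution are details the paper leaves implicit.
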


The result captures the desired behavior of process supervision. If different next actions from $s$ lead to different-quality futures, moving toward $q_\phi$ improves the local value. If all actions are equally good, the variance term is small and there is little to learn at that prefix. Thus the edit naturally focuses on prefixes where the next decision matters. The proof is given in Appendix~\ref{sec:detailed_proofs}.

In practice, we cannot compute \eqref{eq:desirability_posterior}, because it requires all possible continuations from $s$. SSOPD therefore uses a successful sample in the group as a finite-sample proxy. Suppose the group contains a useful witness $z$. We condition a stop-gradient teacher on the same prefix $s$ together with $z$ as extra context:
\begin{equation}
q_T(\cdot\mid s,z)
=
\pi_{\bar\theta}(\cdot\mid \operatorname{Hint}(s,z)),
\qquad
p_\theta(\cdot\mid s)=\pi_\theta(\cdot\mid s).
\label{eq:ssopd_teacher}
\end{equation}
Here $\operatorname{Hint}(s,z)$ packages the prompt-prefix state $s$ together with the self-generated witness $z$. The teacher is allowed to use this hindsight, but the student is not. The auxiliary loss distills the teacher's local decision distribution back into the ordinary student policy. Following the OPSD implementation in \S\ref{sec:background_opsd}, we use the pointwise-clipped forward KL
\begin{equation}
\ell_{\mathrm{OPSD}}(s,z)
=
\sum_a
\operatorname{clip}_{\tau_{\mathrm{clip}}}
\left(
q_T(a\mid s,z)
\left[
\log q_T(a\mid s,z)-\log p_\theta(a\mid s)
\right]
\right).
\label{eq:method_local_opsd}
\end{equation}
The clipping is applied to each action-level summand before summing over the vocabulary, matching \eqref{eq:local_opsd}. Importantly, $z$ is not treated as an SFT target. It only shapes a next-token distribution at the failed prefix.

\subsection{Choosing Witnesses by Stopping Time}

The remaining design question is which completions should instantiate the witness-prefix pair. A useful successful witness should point toward desirable futures, and a useful failed rollout should provide prefixes where correction matters. SSOPD makes both choices with the same stopping-time lens: fast success is a strong positive witness, while long failure is a strong negative witness.

Introduce an independent random deadline $D$ with survival function
\[
\Pr(D\ge \ell)=\gamma^\ell,
\qquad 0<\gamma<1.
\]
The parameter $\gamma$ controls how much we prefer shorter successful continuations: smaller $\gamma$ makes the deadline tighter, while $\gamma$ near one makes the preference milder.

For a continuation from prefix $s$, let $L_s(Y)$ be the remaining stopping time. The event of solving before the deadline gives the fast-success desirability
\begin{equation}
\phi_F^s(Y)=\mathbf 1\{R(Y)=1\}\gamma^{L_s(Y)}.
\label{eq:fast_success_desirability}
\end{equation}
Let $V_F^\pi$ and $Q_F^\pi$ denote the corresponding value and action-value. The fast-success posterior is
\[
q_F(a\mid s)=\frac{\pi(a\mid s)Q_F^\pi(s,a)}{V_F^\pi(s)}.
\]
It favors actions that lead not only to correct answers, but to correct answers reached before the soft deadline. Appendix~\ref{sec:detailed_proofs} first gives a generic approximation result for teacher-induced edits (Theorem~\ref{thm:approx_ssopd_edit}), which we instantiate below for SSOPD.

At the root, a correct completion $y$ has fast-success weight
\[
w_F(y)=\mathbf 1\{R(y)=1\}\gamma^{\tau(y)}.
\]
Within the correct set, this weight is largest for the shortest correct completion:
\begin{equation}
y^+=\arg\min_{y\in\mathcal C(x)}\tau(y).
\label{eq:shortest_correct}
\end{equation}
This choice should not be read as a claim that shorter reasoning is always better. It is the highest-weight observed witness under the fast-success criterion, and it is used only as privileged context for the teacher.

The same deadline identifies persistent failures. A wrong completion receives weight
\begin{equation}
w_P(y)=\mathbf 1\{R(y)=0\}\left(1-\gamma^{\tau(y)}\right).
\label{eq:persistent_failure_weight}
\end{equation}
Among wrong completions, this weight increases with stopping time. A long wrong rollout is therefore strong evidence that its early prefixes lead to an extended failed continuation, which makes those prefixes useful places to intervene. SSOPD selects
\begin{equation}
y^-=\arg\max_{y\in\mathcal W(x)}\tau(y).
\label{eq:longest_wrong}
\end{equation}
Thus the shortest-correct / longest-wrong rule is not an independent heuristic. It is the finite-group hard-witness approximation to editing persistent failed prefixes toward fast-success actions.

\subsection{Empirical Objective and Frontier Weighting}

Once the pair $(y^+,y^-)$ is chosen, the empirical objective follows directly. We edit only the first $K$ generated positions of the failed completion:
\[
K_-=\min(K,\tau(y^-)),
\qquad
s_t^-=(x,y^-_{<t}),
\qquad
t=0,\ldots,K_- -1.
\]
The SSOPD auxiliary loss for prompt $x$ is
\begin{equation}
\mathcal L_{\mathrm{SSOPD}}(x)
=
\frac1{K_-}\sum_{t=0}^{K_- -1}
\ell_{\mathrm{OPSD}}(s_t^-,y^+).
\label{eq:empirical_fp_opsd}
\end{equation}
The roles of the two completions are deliberately asymmetric. The long failed suffix of $y^-$ is not used as token supervision; it certifies that the selected early prefixes are on a poor on-policy route. The correct completion $y^+$ is not copied by the student; it gives the teacher privileged evidence for a better local continuation.

The empirical loss above is a hard-witness approximation, so its usefulness depends on whether the teacher induced by $y^+$ points in a good local direction at the prefixes of $y^-$. For a selected prefix $s_t^-$, define the teacher approximation error to the fast-success posterior as
\[
\varepsilon_{F,s_t^-}(y^+)
=
\left|
\mathbb E_{a\sim q_T(\cdot\mid s_t^-,y^+)}[Q_F^\pi(s_t^-,a)]
-
\mathbb E_{a\sim q_F(\cdot\mid s_t^-)}[Q_F^\pi(s_t^-,a)]
\right|,
\]
where $q_F$ is the posterior induced by the fast-success desirability in \eqref{eq:fast_success_desirability}. The next result states the resulting edit condition.

\begin{theorem}[Stopping-time-aware SSOPD lower bound]
\label{thm:stopping_opsd_bound}
Let $y^+$ and $y^-$ be selected by \eqref{eq:shortest_correct} and \eqref{eq:longest_wrong}. For each $s_t^-$, define the teacher-induced edit
\[
\tilde\pi_\eta^{s_t^-}(a\mid s_t^-)
=
(1-\eta)\pi(a\mid s_t^-)
+
\eta q_T(a\mid s_t^-,y^+).
\]
Then
\[
\frac1{K_-}\sum_{t=0}^{K_- -1}
\left[V_F^{\tilde\pi_\eta^{s_t^-}}(s_t^-)-V_F^\pi(s_t^-)\right]
\ge
\eta\frac1{K_-}\sum_{t=0}^{K_- -1}
\left(
\frac{\operatorname{Var}_{a\sim\pi(\cdot\mid s_t^-)}[Q_F^\pi(s_t^-,a)]}{V_F^\pi(s_t^-)}
-
\varepsilon_{F,s_t^-}(y^+)
\right).
\]
\end{theorem}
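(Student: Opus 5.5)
The plan is to prove a per-prefix inequality and then average over $t=0,\ldots,K_- -1$. Since both sides are arithmetic means of per-prefix quantities with the same weights $1/K_-$, it suffices to fix one prefix $s=s_t^-$ and show
\[
V_F^{\tilde\pi_\eta^{s}}(s)-V_F^\pi(s)\ge \eta\left(\frac{\operatorname{Var}_{a\sim\pi(\cdot\mid s)}[Q_F^\pi(s,a)]}{V_F^\pi(s)}-\varepsilon_{F,s}(y^+)\right).
\]
The selection rules \eqref{eq:shortest_correct} and \eqref{eq:longest_wrong} only fix which prefixes and which witness enter the bound. The inequality itself holds for any prefix and any witness, so the choice of $(y^+,y^-)$ plays no analytic role.

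\textbf{Step 1: exact value change under a local edit.} Because the edit changes only the action distribution at $s$, the continuation law after the first action is still $\pi$. Hence $Q_F^{\tilde\pi}(s,a)=Q_F^\pi(s,a)$ for every $a$. One subtlety is that $\phi_F^s$ depends on the remaining length $L_s$ measured from $s$, but the factor $\gamma^{L_s}$ is still determined by the sampled continuation, so the identity holds. This gives
\[
V_F^{\tilde\pi}(s)=\sum_a\tilde\pi(a\mid s)\,Q_F^\pi(s,a),
\]
and therefore
\[
V_F^{\tilde\pi}(s)-V_F^\pi(s)=\eta\left(\mathbb E_{q_T}[Q_F^\pi(s,a)]-\mathbb E_\pi[Q_F^\pi(s,a)]\right).
\]
This is the same linearity used in Theorem~\ref{thm:desirability_edit}.

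\textbf{Step 2: compare with the ideal posterior.} Add and subtract $\mathbb E_{q_F}[Q_F^\pi(s,a)]$. Applying Theorem~\ref{thm:desirability_edit} with $\phi=\phi_F^s$ (or its one-line computation directly) gives
\[
\mathbb E_{q_F}[Q_F^\pi]=\frac{\mathbb E_\pi[(Q_F^\pi)^2]}{V_F^\pi},
\]
so
\[
\mathbb E_{q_F}[Q_F^\pi]-\mathbb E_\pi[Q_F^\pi]=\frac{\operatorname{Var}_\pi[Q_F^\pi]}{V_F^\pi}.
\]
The remaining term satisfies $\mathbb E_{q_T}[Q_F^\pi]-\mathbb E_{q_F}[Q_F^\pi]\ge-\varepsilon_{F,s}(y^+)$ by the definition of $\varepsilon$ as an absolute difference. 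Multiplying by $\eta>0$ and averaging over $t$ gives the claim.

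\textbf{Main obstacle: positivity of $V_F^\pi(s_t^-)$.} The variance ratio requires $V_F^\pi(s_t^-)>0$, which I would state as a standing assumption, as in Theorem~\ref{thm:desirability_edit}. It is plausible but not automatic. Prefixes of $y^-$ lie on the support of $\pi$ and are not forced to fail, so any reachable correct continuation makes $V_F^\pi$ positive. Otherwise I would adopt the convention that the term is $0$ when $V_F^\pi(s)=0$: then $Q_F^\pi(s,\cdot)=0$ $\pi$-a.s., and both sides reduce to $0\ge-\eta\varepsilon$. Beyond this point, the argument is bookkeeping.
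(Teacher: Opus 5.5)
Your proposal is correct and matches the paper's proof: the paper applies Theorem~\ref{thm:approx_ssopd_edit} at each $s_t^-$ and averages over $t$, and your Steps 1--2 simply inline that theorem's proof (exact linear value change under a one-state edit, add-and-subtract $\mathbb E_{q_F}[Q_F^\pi]$, Theorem~\ref{thm:desirability_edit} for the variance term, and the bound $\Delta_T\ge-\varepsilon$). Your point that $V_F^\pi(s_t^-)>0$ is needed is well taken: Theorem~\ref{thm:approx_ssopd_edit} assumes it, but the statement and proof of Theorem~\ref{thm:stopping_opsd_bound} omit it.
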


The bound mirrors the construction and identifies when the auxiliary edit is well aligned. The selected failed prefix should contain a real branching decision for fast success, and the shortest correct completion should give the teacher useful guidance for that decision. The long wrong completion chooses where to edit; the short correct completion determines what direction the teacher suggests. The proof is in Appendix~\ref{sec:detailed_proofs}.

The auxiliary edit should also be applied only when the group contains meaningful contrast. If the model almost never solves $x$, the group has no reliable successful witness. If the model almost always solves $x$, the few failures are less representative. The most useful regime is the current reasoning frontier, where correct and wrong branches coexist for the same prompt.

Let
\[
p_x=\Pr_\pi(R(Y)=1\mid x),
\qquad
\hat p_x=\frac1G\sum_{i=1}^G R(y_i).
\]
We therefore weight the SSOPD term by
\begin{equation}
\lambda_x=\lambda_0\cdot 4\hat p_x(1-\hat p_x),
\label{eq:frontier_weight}
\end{equation}
treating $\lambda_x$ as stop-gradient. The coefficient is largest for balanced groups and vanishes when the group is all-correct or all-wrong. It also has a useful tree-level interpretation. Let $V_R^\pi(s)=\Pr_\pi(R=1\mid s)$ and $Q_R^\pi(s,a)=\Pr_\pi(R=1\mid s,a)$.

\begin{proposition}[Frontier coefficient as branching variance]
\label{prop:frontier_coefficient}
For binary rewards and a finite horizon with terminal padding,
\[
p_x(1-p_x)
=
\sum_{t=0}^{H-1}
\mathbb E_{S_t}
\left[
\operatorname{Var}_{a\sim\pi(\cdot\mid S_t)}
[Q_R^\pi(S_t,a)]
\right].
\]
Moreover, if $K_c=\sum_i R(y_i)$ and $K_c\sim\operatorname{Binomial}(G,p_x)$, then
\[
\mathbb E[K_c(G-K_c)]=G(G-1)p_x(1-p_x).
\]
\end{proposition}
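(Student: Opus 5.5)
The first identity is a martingale variance decomposition, and the second is a direct binomial moment computation; I would prove them in that order.

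\textbf{Setting up the martingale.} Extend every trajectory to length $H$ by terminal padding: once the completion stops, the state is absorbing and the only action is a padding token. Then $S_0,\dots,S_H$ is well defined and $S_H$ determines $R$. Set
\[
M_t=V_R^\pi(S_t)=\Pr_\pi(R=1\mid S_t).
\]
This is a Doob martingale with respect to the prefix filtration, with $M_0=V_R^\pi(x)=p_x$ and $M_H=R\in\{0,1\}$. Since $R$ is binary,
\[
\operatorname{Var}(R)=\mathbb E[R]-\mathbb E[R]^2=p_x(1-p_x).
\]

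\textbf{Orthogonal increments.} Write $R-p_x=\sum_{t=0}^{H-1}(M_{t+1}-M_t)$. Martingale increments are orthogonal in $L^2$: for $s<t$, condition on $S_t$ to get $\mathbb E[(M_{s+1}-M_s)(M_{t+1}-M_t)]=0$. Hence
\[
p_x(1-p_x)=\sum_{t=0}^{H-1}\mathbb E\big[(M_{t+1}-M_t)^2\big].
\]

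\textbf{Identifying each term.} For each $t$, use the fact that the prefix process is deterministic given actions: $S_{t+1}=(S_t,A_t)$, so $M_{t+1}=Q_R^\pi(S_t,A_t)$ with $A_t\sim\pi(\cdot\mid S_t)$. The Bellman consistency relation $M_t=V_R^\pi(S_t)=\mathbb E_{a\sim\pi(\cdot\mid S_t)}[Q_R^\pi(S_t,a)]$ then gives
\[
\mathbb E\big[(M_{t+1}-M_t)^2\mid S_t\big]=\operatorname{Var}_{a\sim\pi(\cdot\mid S_t)}\big[Q_R^\pi(S_t,a)\big].
\]
Taking the outer expectation over $S_t$ yields the first identity.

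\textbf{Padding.} This is the main bookkeeping obstacle. After termination, the only action is the padding token, so $Q_R^\pi$ is constant over the support and the variance term vanishes. The sum therefore legitimately runs to $H-1$ regardless of the stopping time $\tau$. I would state this explicitly, together with the fact that the variance is finite because the rewards are bounded.

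\textbf{The binomial identity.} Compute
\[
\mathbb E[K_c(G-K_c)]=G\,\mathbb E[K_c]-\mathbb E[K_c^2].
\]
Use $\mathbb E[K_c]=Gp_x$ and $\mathbb E[K_c^2]=Gp_x(1-p_x)+G^2p_x^2$. Then
\[
\mathbb E[K_c(G-K_c)]=G^2p_x-Gp_x+Gp_x^2-G^2p_x^2=G(G-1)p_x(1-p_x).
\]

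Equivalently, $K_c(G-K_c)$ counts ordered pairs $(i,j)$ with $i\ne j$ and $r_i=1,\ r_j=0$. There are $G(G-1)$ such ordered pairs, and by independence each occurs with probability $p_x(1-p_x)$. This also gives the interpretation that $4\hat p_x(1-\hat p_x)$ is, up to the factor $G/(G-1)$, an unbiased estimate of the branching variance in the first identity.
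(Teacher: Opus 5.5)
Your proposal is correct and follows essentially the same route as the paper. Both use the Doob martingale $M_t=V_R^\pi(S_t)$ and orthogonality of its increments, identify each conditional squared increment with $\operatorname{Var}_{a\sim\pi(\cdot\mid S_t)}[Q_R^\pi(S_t,a)]$, and finish with the binomial moments $\mathbb E[K_c]=Gp_x$ and $\mathbb E[K_c^2]=Gp_x(1-p_x)+G^2p_x^2$. One small slip in your closing aside: $\frac{G}{G-1}\hat p_x(1-\hat p_x)$ is unbiased for $p_x(1-p_x)$, so $\frac{G}{G-1}\cdot 4\hat p_x(1-\hat p_x)$ estimates four times the branching variance, not the branching variance itself.
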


Thus $p_x(1-p_x)$ is not only a convenient confidence heuristic. It measures how much outcome-relevant branching exists inside the current reasoning tree, and it also predicts how many usable correct--wrong pairs a finite group will contain. This is why the auxiliary loss is strongest near the current reasoning frontier. The proof is in Appendix~\ref{sec:detailed_proofs}.

The final per-prompt objective is
\begin{equation}
\mathcal L(x)
=
\mathcal L_{\mathrm{GRPO}}(x)
+
\lambda_x
\mathcal L_{\mathrm{SSOPD}}(x).
\label{eq:full_method_objective}
\end{equation}
Operationally, when no correct--wrong pair exists, no witness pair is instantiated and $\lambda_x=0$. SSOPD therefore adds dense process supervision in the regime where the on-policy group can support it: the model has found at least one way to succeed, has also exposed a failed route, and can use the former to repair the latter at the prefix level.

\section{Experiments}
\label{sec:experiments}

Our experiments test whether Self-Supervised On-Policy Distillation (SSOPD) can turn the model's own on-policy attempts into useful process supervision. The central comparison is information-controlled. SFT and OPSD use additional solution information, while GRPO and SSOPD use only answer verification and samples from the current policy. We ask whether SSOPD can improve the non-privileged GRPO setting, and how close self-generated supervision can come to solution-conditioned distillation.

\subsection{Experimental Setup}

\textbf{Models and data.} We use the instruct-tuned Qwen3 family~\citep{qwen3technicalreport} at three scales: Qwen3-1.7B, Qwen3-4B, and Qwen3-8B. Training prompts come from the mathematical reasoning subset of OpenThoughts~\citep{guha2025openthoughtsdatarecipesreasoning}; for each scale, we sample up to 30K problems with verified final answers. Reference reasoning traces are used only by the baselines that explicitly require additional solution information, namely SFT and OPSD. Evaluation covers AIME 2024, AIME 2025, and HMMT 2025.

\textbf{Compared post-training recipes.} We evaluate four updates from the same base checkpoints, organized by the information available during training. \textbf{GRPO}~\citep{shao2024deepseekmath} optimizes binary answer rewards with grouped rollouts and uses no reference solution. \textbf{SSOPD} keeps the same non-privileged setting, but adds our auxiliary distillation term: the model's own shortest correct completion serves as the successful witness, prefixes of the longest wrong completion are edited, and the auxiliary loss is weighted by the raw group success rate through $\hat p_x(1-\hat p_x)$. For privileged comparisons, \textbf{SFT} directly imitates reference reasoning traces, and \textbf{OPSD}~\citep{zhao2026self} scores student prefixes with a solution-conditioned teacher.

\textbf{Training and evaluation protocol.} Unless otherwise stated, SSOPD uses a fixed privileged teacher anchored at the initial checkpoint while the student policy is updated with LoRA~\citep{hu2022lora}. Teacher forward passes disable the trainable adapters, and the base coefficient of the SSOPD weight is set to $\lambda_0=0.5$. We report Avg@12 accuracy: for each problem, the model samples 12 completions and is credited according to the average verified correctness. Evaluation uses thinking mode, temperature $1.0$, top-p $0.95$, and a maximum generation length of $38$k tokens. Additional training details appear in Appendix~\ref{sec:experimental_details}.

The protocol is designed to isolate the value of reusing the sampled group itself. GRPO and SSOPD are information-matched: both update from prompts, on-policy completions, and verified final answers. Their difference is how much structure they extract from the same group. SFT and OPSD are included as privileged references because they use additional solution-side information, and therefore measure how close self-supervised group reuse can come to solution-conditioned supervision.

\subsection{Main Results}
\label{sec:main_results}

\begin{table}[t]
\centering
\caption{Math reasoning Avg@12 after post-training Qwen3 backbones. Results are reported in percentage points. SFT and OPSD use additional solution information, while GRPO and SSOPD use only answer verification and on-policy samples. Evaluation uses thinking mode, temperature $1.0$, top-p $0.95$, and maximum length $38$k.}
\vspace{1mm}
\label{tab:main_results}
\resizebox{0.88\textwidth}{!}{
\begin{tabular}{@{}lcccc@{}}
\toprule
\textbf{Backbone / post-training recipe} & \textbf{AIME 2024} & \textbf{AIME 2025} & \textbf{HMMT 2025} & \textbf{Macro avg.} \\
\midrule
\multicolumn{5}{@{}l}{\textit{Qwen3-8B Instruct}} \\
\quad No update              & 75.8 & 65.6 & 43.9 & 61.8 \\
\multicolumn{5}{@{}l}{\quad\textit{No additional solution information}} \\
\quad + GRPO                 & 76.4 & 68.9 & 46.7 & 64.0 \\
\rowcolor{gray!20}\quad + SSOPD (ours)
& \textbf{78.6} & \textbf{70.8} & \textbf{47.5} & \textbf{65.6} \\
\multicolumn{5}{@{}l}{\quad\textit{Uses additional solution information}} \\
\quad + SFT                  & 72.3 & 64.2 & 42.9 & 59.8 \\
\quad + OPSD                 & 77.8 & \textbf{70.8} & 45.8 & 64.8 \\
\midrule
\multicolumn{5}{@{}l}{\textit{Qwen3-4B Instruct}} \\
\quad No update              & 74.9 & 66.4 & 42.2 & 61.2 \\
\multicolumn{5}{@{}l}{\quad\textit{No additional solution information}} \\
\quad + GRPO                 & 75.6 & 68.1 & 44.4 & 62.7 \\
\rowcolor{gray!20}\quad + SSOPD (ours)
& 75.8 & \textbf{68.3} & 45.0 & 63.0 \\
\multicolumn{5}{@{}l}{\quad\textit{Uses additional solution information}} \\
\quad + SFT                  & 70.2 & 62.3 & 43.4 & 58.6 \\
\quad + OPSD                 & \textbf{76.4} & \textbf{68.3} & \textbf{46.1} & \textbf{63.6} \\
\midrule
\multicolumn{5}{@{}l}{\textit{Qwen3-1.7B Instruct}} \\
\quad No update              & 51.5 & 36.7 & 23.1 & 37.1 \\
\multicolumn{5}{@{}l}{\quad\textit{No additional solution information}} \\
\quad + GRPO                 & 51.1 & 38.3 & 23.7 & 37.7 \\
\rowcolor{gray!20}\quad + SSOPD (ours)
& 51.7 & 38.6 & 24.7 & 38.3 \\
\multicolumn{5}{@{}l}{\quad\textit{Uses additional solution information}} \\
\quad + SFT                  & 48.4 & 36.3 & 22.7 & 35.8 \\
\quad + OPSD                 & \textbf{57.2} & \textbf{43.9} & \textbf{29.2} & \textbf{43.4} \\
\bottomrule
\end{tabular}
}
\end{table}

Table~\ref{tab:main_results} shows that SSOPD is the strongest method that does not use additional solution information. It improves over GRPO in every benchmark at every scale: all nine entries increase, with macro-average gains of $+0.6$ on Qwen3-1.7B, $+0.3$ on Qwen3-4B, and $+1.6$ on Qwen3-8B. This is the main empirical takeaway: SSOPD adds a dense, process-level learning signal while keeping the same information budget as reward-only on-policy training.

The 8B result is particularly notable. SSOPD not only improves over GRPO, but also surpasses the solution-conditioned OPSD baseline in macro average ($65.6$ vs. $64.8$), achieves the best AIME 2024 and HMMT 2025 scores, and ties the best AIME 2025 score. In this regime, the model is strong enough to generate reliable successful witnesses, and self-supervision from the group can compete with methods that use external solution information.

The smaller-scale results clarify the boundary of this behavior. On Qwen3-1.7B and Qwen3-4B, OPSD remains stronger overall, indicating that external solution-conditioned teachers are still valuable when the model's own correct completions are less stable. At the same time, SSOPD remains consistently better than GRPO, suggesting that even imperfect self-generated witnesses can improve the on-policy learning signal. SFT degrades on average at all three scales, likely because directly imitating fixed reference traces can shift the model away from its own test-time reasoning distribution.

The benchmark-level pattern is also consistent with the intended use of SSOPD. AIME 2024 and AIME 2025 already have relatively high base accuracy for the larger models, so improvements there test whether the auxiliary loss can refine an already competent policy without adding off-policy traces. HMMT 2025 is harder and leaves more room for persistent failed rollouts, making it a useful stress test for the longest-wrong prefix selection. SSOPD improves over GRPO on all three benchmarks, suggesting that the self-distillation signal is not tied to a single difficulty regime.

\subsection{Ablation on the Frontier Weight}
\label{sec:frontier_weight_ablation}

We ablate the auxiliary coefficient on HMMT 2025 using the Qwen3-4B setting. With the dynamic frontier weight $\lambda_x=\lambda_0\cdot4\hat p_x(1-\hat p_x)$, $\lambda_0\in\{0.4,0.5,0.6,0.7\}$ gives Avg@12 scores of $44.4$, $45.0$, $44.7$, and $44.2$, respectively. Using the same values as fixed global coefficients instead gives $44.2$, $44.7$, $44.4$, and $43.9$. Thus the dynamic coefficient is stable around the default $\lambda_0=0.5$ and improves over fixed weighting at every tested value, supporting its role in allocating distillation to prompts whose sampled groups contain usable correct--wrong contrast.

\section{Related Work}
\label{sec:related_work}

Reasoning post-training with verifiable rewards has become a standard way to improve mathematical reasoning. GRPO-style RLVR methods~\citep{shao2024deepseekmath,guo2025deepseek,yu2025dapo} and recent reasoning post-training systems~\citep{team2025kimi,rastogi2025magistral} optimize answer correctness from sampled trajectories, keeping training close to the model's own rollout distribution. For our purposes, the remaining bottleneck is the granularity of feedback: the verifier usually returns a terminal reward, so many intermediate reasoning decisions receive credit only through a sparse sequence-level signal. SSOPD keeps the same on-policy and verifier-based setting, but extracts an additional dense signal from the internal structure of the group. Rather than changing the reward model or adding step annotations, it uses a successful on-policy sample to supervise failed on-policy prefixes from the same prompt.

Dense supervision for reasoning is often obtained through reference traces, process labels, or distillation. Supervised fine-tuning on curated mathematical traces~\citep{guha2025openthoughtsdatarecipesreasoning,xiao2026mimov2flashtechnicalreport} and process supervision~\citep{lightman2023let,zhang2025lessons} provide more localized targets, but require reference states, annotations, or learned reward models. Distillation and self-improvement methods transfer teacher distributions, generated sequences, or behavior under richer context~\citep{hinton2015distillingknowledgeneuralnetwork,kim2016sequence,sanh2019distilbert,gu2024minillm,zelikman2022star,gulcehre2023reinforced,snell2022learning,huang2022context}. Closest to our setting, privileged on-policy self-distillation conditions a teacher on verified solution information~\citep{zhao2026self}. SSOPD instead uses no external solution trace: the teacher is conditioned only on a correct completion from the same sampled group and is distilled into prefixes of an on-policy failure.

\section{Conclusion}

We introduced Self-Supervised On-Policy Distillation, an auxiliary objective that turns correct--wrong contrast inside a GRPO group into dense process supervision. SSOPD uses the shortest correct on-policy completion as a self-generated witness and distills a teacher conditioned on that witness into prefixes of the longest wrong completion, without requiring external solution traces. The stopping-time view motivates this finite-group rule, and the frontier weight concentrates the auxiliary loss on prompts where the sampled group contains meaningful branching. Empirically, SSOPD improves over GRPO on every evaluated model-benchmark pair and surpasses the solution-conditioned OPSD baseline in macro Avg@12 at 8B. These results suggest that on-policy groups contain reusable process information beyond their terminal rewards: a mixed group not only tells us which samples succeeded, but also exposes nearby failed prefixes where the model can be locally corrected. This perspective may be useful for other verifier-based training settings where solution traces are scarce but multiple attempts per prompt are already collected.

A mild limitation is that SSOPD relies on reliable answer verification and useful correct witnesses from the current policy, which may be weaker outside mathematical reasoning or for smaller models. Future work can improve witness selection, combine self-generated witnesses with occasional external traces, and study finer prefix selection rules. This paper presents work whose goal is to advance the field of machine learning. We do not foresee specific negative societal consequences.

\clearpage
\bibliographystyle{plainnat}
\bibliography{cite}

\clearpage
\onecolumn
\appendix

\section{Training Algorithm}
\label{sec:appendix_training_algorithm}

Algorithm~\ref{alg:ssopd} summarizes the SSOPD update using the same notation as \S\ref{sec:method}. A prompt $x$ produces a GRPO group $\mathcal G(x)$ under the behavior policy $\pi$. If the group contains both correct and wrong completions, the shortest correct completion $y^+$ is used as privileged hindsight, and the first $K_-$ prefixes of the longest wrong completion $y^-$ are edited by distillation.

\begin{algorithm*}[t]
\caption{SSOPD auxiliary update}
\label{alg:ssopd}
\begin{algorithmic}[1]
\Require Prompt minibatch $\mathcal B$, behavior policy $\pi$, student policy $\pi_\theta$, stop-gradient teacher $\pi_{\bar\theta}$, verifier $R$, group size $G$, prefix budget $K$, base coefficient $\lambda_0$, clipping threshold $\tau_{\mathrm{clip}}$
\ForAll{$x\in\mathcal B$}
    \State Sample $\mathcal G(x)=\{y_i\}_{i=1}^G$, with $y_i\sim\pi(\cdot\mid x)$
    \State Compute rewards $r_i\gets R(y_i)$ and the GRPO loss $\mathcal L_{\mathrm{GRPO}}(x)$
    \State $\mathcal C(x)\gets\{y_i:r_i=1\}$, $\mathcal W(x)\gets\{y_i:r_i=0\}$
    \If{$|\mathcal C(x)|>0$ and $|\mathcal W(x)|>0$}
        \State $y^+\gets\arg\min_{y\in\mathcal C(x)}\tau(y)$
        \State $y^-\gets\arg\max_{y\in\mathcal W(x)}\tau(y)$
        \State $K_-\gets\min(K,\tau(y^-))$, $\hat p_x\gets |\mathcal C(x)|/G$
        \State $\lambda_x\gets\lambda_0\cdot 4\hat p_x(1-\hat p_x)$ \Comment{stop-gradient}
        \For{$t=0,\ldots,K_- -1$}
            \State $s_t^-\gets (x,y^-_{<t})$
            \State $q_T(\cdot\mid s_t^-,y^+)\gets\pi_{\bar\theta}(\cdot\mid\operatorname{Hint}(s_t^-,y^+))$
            \State $p_\theta(\cdot\mid s_t^-)\gets\pi_\theta(\cdot\mid s_t^-)$
        \EndFor
        \State $\mathcal L_{\mathrm{SSOPD}}(x)\gets\frac1{K_-}\sum_{t=0}^{K_- -1}\ell_{\mathrm{OPSD}}(s_t^-,y^+)$
    \Else
        \State $\lambda_x\gets0$, $\mathcal L_{\mathrm{SSOPD}}(x)\gets0$
    \EndIf
\EndFor
\State Update $\theta$ using $\frac1{|\mathcal B|}\sum_{x\in\mathcal B}\left[\mathcal L_{\mathrm{GRPO}}(x)+\lambda_x\mathcal L_{\mathrm{SSOPD}}(x)\right]$
\end{algorithmic}
\end{algorithm*}

\section{Detailed Proofs}
\label{sec:detailed_proofs}

\paragraph{Local desirability edit.}
\begin{proof}[Proof of Theorem~\ref{thm:desirability_edit}]
Only the action distribution at state $s$ is edited, so
\[
V_\phi^{\pi_\eta}(s)=\sum_a \pi_\eta(a\mid s)Q_\phi^\pi(s,a).
\]
Substituting $\pi_\eta(a\mid s)=(1-\eta)\pi(a\mid s)+\eta q_\phi(a\mid s)$ gives
\[
V_\phi^{\pi_\eta}(s)
=(1-\eta)V_\phi^\pi(s)
+
\eta\sum_a q_\phi(a\mid s)Q_\phi^\pi(s,a).
\]
Using
\[
q_\phi(a\mid s)=\frac{\pi(a\mid s)Q_\phi^\pi(s,a)}{V_\phi^\pi(s)},
\]
we obtain
\[
\sum_a q_\phi(a\mid s)Q_\phi^\pi(s,a)
=
\frac{\mathbb E_{a\sim\pi(\cdot\mid s)}[Q_\phi^\pi(s,a)^2]}{V_\phi^\pi(s)}.
\]
Therefore
\[
V_\phi^{\pi_\eta}(s)-V_\phi^\pi(s)
=
\eta\left(
\frac{\mathbb E[Q_\phi^\pi(s,a)^2]}{V_\phi^\pi(s)}
-V_\phi^\pi(s)
\right).
\]
Since $V_\phi^\pi(s)=\mathbb E_{a\sim\pi(\cdot\mid s)}[Q_\phi^\pi(s,a)]$, this equals
\[
\eta
\frac{
\mathbb E[Q_\phi^\pi(s,a)^2]
-
\mathbb E[Q_\phi^\pi(s,a)]^2
}{V_\phi^\pi(s)}
=
\eta
\frac{\operatorname{Var}_{a\sim\pi(\cdot\mid s)}[Q_\phi^\pi(s,a)]}{V_\phi^\pi(s)}.
\]
The variance is nonnegative, proving the claim.
\end{proof}

\paragraph{Approximate teacher-induced edit.}
For a self-generated witness $z$, define the teacher approximation error
\[
\varepsilon_{\phi,s}(z)
=
\left|
\mathbb E_{a\sim q_T(\cdot\mid s,z)}[Q_\phi^\pi(s,a)]
-
\mathbb E_{a\sim q_\phi(\cdot\mid s)}[Q_\phi^\pi(s,a)]
\right|.
\]

\begin{theorem}[Approximate SSOPD edit]
\label{thm:approx_ssopd_edit}
For any prefix state $s$ with $V_\phi^\pi(s)>0$, let
\[
\tilde\pi_\eta(a\mid s)=(1-\eta)\pi(a\mid s)+\eta q_T(a\mid s,z).
\]
Then
\[
V_\phi^{\tilde\pi_\eta}(s)-V_\phi^\pi(s)
\ge
\eta
\left(
\frac{\operatorname{Var}_{a\sim\pi(\cdot\mid s)}[Q_\phi^\pi(s,a)]}{V_\phi^\pi(s)}
-
\varepsilon_{\phi,s}(z)
\right).
\]
\end{theorem}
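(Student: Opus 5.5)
The plan is to reuse the decomposition from the proof of Theorem~\ref{thm:desirability_edit}, replacing the exact posterior $q_\phi$ by the teacher $q_T(\cdot\mid s,z)$ and controlling the discrepancy through $\varepsilon_{\phi,s}(z)$.

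First, since only the action distribution at $s$ is modified, the continuation values after the first action are unchanged. Here I assume, as in the earlier proof, that $s$ is not revisited below itself; this holds because prefix states strictly grow along a path. Hence
\[
V_\phi^{\tilde\pi_\eta}(s)=\sum_a \tilde\pi_\eta(a\mid s)Q_\phi^\pi(s,a)=(1-\eta)V_\phi^\pi(s)+\eta\,\mathbb E_{a\sim q_T(\cdot\mid s,z)}[Q_\phi^\pi(s,a)].
\]
Subtracting $V_\phi^\pi(s)$ gives the exact identity
\[
V_\phi^{\tilde\pi_\eta}(s)-V_\phi^\pi(s)=\eta\bigl(\mathbb E_{q_T}[Q_\phi^\pi(s,a)]-V_\phi^\pi(s)\bigr).
\]

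Second, I add and subtract $\mathbb E_{a\sim q_\phi(\cdot\mid s)}[Q_\phi^\pi(s,a)]$. The computation already carried out in the proof of Theorem~\ref{thm:desirability_edit} shows
\[
\mathbb E_{q_\phi}[Q_\phi^\pi]-V_\phi^\pi(s)=\frac{\operatorname{Var}_{a\sim\pi(\cdot\mid s)}[Q_\phi^\pi(s,a)]}{V_\phi^\pi(s)},
\]
and this is where $V_\phi^\pi(s)>0$ is needed.

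Third, the leftover term $\mathbb E_{q_T}[Q_\phi^\pi]-\mathbb E_{q_\phi}[Q_\phi^\pi]$ is bounded below by $-\varepsilon_{\phi,s}(z)$, since a real number is at least minus its absolute value. Multiplying by $\eta>0$ gives the claimed bound.

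I do not expect a genuine obstacle; the argument is a one-line perturbation of the earlier theorem. The only point requiring care is the first step: justifying that the one-state edit leaves $Q_\phi^\pi(s,\cdot)$ unchanged, which requires that the tree structure of prefix states prevents any return to $s$. I would state this explicitly rather than leave it implicit.
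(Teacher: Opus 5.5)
Your proposal is correct and follows essentially the same route as the paper's proof. Both derive the exact identity $V_\phi^{\tilde\pi_\eta}(s)-V_\phi^\pi(s)=\eta\bigl(\mathbb E_{q_T}[Q_\phi^\pi(s,a)]-V_\phi^\pi(s)\bigr)$, add and subtract the $q_\phi$-expectation, invoke Theorem~\ref{thm:desirability_edit} for the variance term, and bound the remainder below by $-\varepsilon_{\phi,s}(z)$; your remark that prefix states cannot be revisited, so $Q_\phi^\pi(s,\cdot)$ is unaffected by the one-state edit, makes explicit a justification the paper leaves implicit.
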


\begin{proof}[Proof of Theorem~\ref{thm:approx_ssopd_edit}]
For the teacher-induced local edit,
\[
V_\phi^{\tilde\pi_\eta}(s)-V_\phi^\pi(s)
=
\eta\left(
\mathbb E_{a\sim q_T(\cdot\mid s,z)}[Q_\phi^\pi(s,a)]
-
V_\phi^\pi(s)
\right).
\]
Add and subtract the ideal posterior expectation:
\[
V_\phi^{\tilde\pi_\eta}(s)-V_\phi^\pi(s)
=
\eta\left(
\mathbb E_{a\sim q_\phi(\cdot\mid s)}[Q_\phi^\pi(s,a)]
-
V_\phi^\pi(s)
\right)
+
\eta\Delta_T,
\]
where
\[
\Delta_T=
\mathbb E_{a\sim q_T(\cdot\mid s,z)}[Q_\phi^\pi(s,a)]
-
\mathbb E_{a\sim q_\phi(\cdot\mid s)}[Q_\phi^\pi(s,a)].
\]
By Theorem~\ref{thm:desirability_edit}, the first parenthesized term is
\[
\frac{\operatorname{Var}_{a\sim\pi(\cdot\mid s)}[Q_\phi^\pi(s,a)]}{V_\phi^\pi(s)}.
\]
By definition of $\varepsilon_{\phi,s}(z)$, $\Delta_T\ge -\varepsilon_{\phi,s}(z)$. Combining the two displays proves the bound.
\end{proof}

\paragraph{Stopping-time objective and finite-group witnesses.}
We instantiate the generic desirability with a random deadline. Let $D$ be independent of the rollout and satisfy $\Pr(D\ge \ell)=\gamma^\ell$ for $0<\gamma<1$. For a continuation from prefix $s$, define
\[
\phi_F^s(Y)=\mathbf 1\{R(Y)=1\}\gamma^{L_s(Y)}.
\]
Let $V_F^\pi$ and $Q_F^\pi$ be the corresponding value and action-value, and let
\[
q_F(a\mid s)=\frac{\pi(a\mid s)Q_F^\pi(s,a)}{V_F^\pi(s)}
\]
be the fast-success posterior.

The same deadline induces the persistent-failure weight
\[
\psi_P(Y)=\mathbf 1\{R(Y)=0\}\left(1-\gamma^{\tau(Y)}\right).
\]
If $K_Y=\min(K,\tau(Y))$ and $\mathcal P_K(Y)=\{s_t(Y)\}_{t=0}^{K_Y-1}$, the corresponding ideal prefix distribution is
\[
\mu_P(s\mid x)
=
\frac{
\mathbb E_\pi\!\left[
\psi_P(Y)\frac1{K_Y}\sum_{t=0}^{K_Y-1}\mathbf 1\{s_t(Y)=s\}
\middle|x\right]
}{
\mathbb E_\pi[\psi_P(Y)\mid x]
}.
\]
The ideal stopping-time objective samples prefixes from $\mu_P$ and edits them toward $q_F$:
\[
\mathcal L_{\mathrm{ideal}}^{F/P}(x)
=
\mathbb E_{s\sim\mu_P(\cdot\mid x)}
\left[
D_{\mathrm{KL}}\!\left(q_F(\cdot\mid s)\middle\|p_\theta(\cdot\mid s)\right)
\right].
\]
Within a finite group, maximizing the root-level weights $\mathbf 1\{R(y)=1\}\gamma^{\tau(y)}$ and $\mathbf 1\{R(y)=0\}(1-\gamma^{\tau(y)})$ yields the shortest correct completion $y^+$ and the longest wrong completion $y^-$ used in SSOPD.

\begin{proof}[Proof of Theorem~\ref{thm:stopping_opsd_bound}]
For every selected prefix $s_t^-\in\mathcal P_K(y^-)$, apply Theorem~\ref{thm:approx_ssopd_edit} with $\phi=\phi_F$ and $z=y^+$. This gives
\[
V_F^{\tilde\pi_\eta^{s_t^-}}(s_t^-)-V_F^\pi(s_t^-)
\ge
\eta\left(
\frac{\operatorname{Var}_{a\sim\pi(\cdot\mid s_t^-)}[Q_F^\pi(s_t^-,a)]}{V_F^\pi(s_t^-)}
-
\varepsilon_{F,s_t^-}(y^+)
\right).
\]
Averaging this inequality over $t=0,\ldots,K_- -1$ yields the stated lower bound.
\end{proof}

\paragraph{Frontier weighting.}

\begin{proof}[Proof of Proposition~\ref{prop:frontier_coefficient}]
Let $S_t$ be the random prefix state at depth $t$, padded after termination by an absorbing terminal state, and define
\[
M_t=V_R^\pi(S_t)=\Pr_\pi(R=1\mid S_t).
\]
Then $(M_t)_{t=0}^H$ is a martingale. Indeed, conditioning on $S_t=s$,
\[
\mathbb E[M_{t+1}\mid S_t=s]
=\sum_a \pi(a\mid s)V_R^\pi(S_{t+1})
=\sum_a \pi(a\mid s)Q_R^\pi(s,a)
=V_R^\pi(s)=M_t.
\]
At the root, $M_0=p_x$; at the terminal padded state, $M_H=R$. Orthogonality of martingale increments gives
\[
\operatorname{Var}(R\mid x)
=\mathbb E[(M_H-M_0)^2\mid x]
=\sum_{t=0}^{H-1}\mathbb E[(M_{t+1}-M_t)^2\mid x].
\]
Conditioning on $S_t=s$, the next value is $Q_R^\pi(s,a)$ after action $a$, so
\[
\mathbb E[(M_{t+1}-M_t)^2\mid S_t=s]
=
\operatorname{Var}_{a\sim\pi(\cdot\mid s)}[Q_R^\pi(s,a)].
\]
Taking expectation over $S_t$ and using $\operatorname{Var}(R\mid x)=p_x(1-p_x)$ for binary $R$ proves the branching-variance identity.

For the finite-group contrast count, if $K_c\sim\operatorname{Binomial}(G,p_x)$, then
\[
\mathbb E[K_c(G-K_c)]
=G\mathbb E[K_c]-\mathbb E[K_c^2].
\]
Using $\mathbb E[K_c]=Gp_x$ and $\mathbb E[K_c^2]=G p_x(1-p_x)+G^2p_x^2$, we get
\[
\mathbb E[K_c(G-K_c)]
=G(G-1)p_x(1-p_x).
\]
\end{proof}

\section{Implementation Details}
\label{sec:experimental_details}

The auxiliary term is computed from the same sampled group used by GRPO; no additional rollout policy is introduced. In the notation of \S\ref{sec:method}, $\pi$ is the old policy that generates the group, $\pi_\theta$ is the LoRA-updated student, and $\pi_{\bar\theta}$ is the fixed teacher used only for target construction. Teacher forward passes are run with stop-gradient. The student receives gradients from both $\mathcal L_{\mathrm{GRPO}}$ and $\mathcal L_{\mathrm{SSOPD}}$. For a selected wrong completion $y^-$, we evaluate the auxiliary loss on prefix states $s_t^-=(x,y^-_{<t})$ for $t<K_-$.

The prompt-level weight uses the raw group success rate
\[
\hat p_x=\frac{|\mathcal C(x)|}{G},
\qquad
\lambda_x=\lambda_0\cdot 4\hat p_x(1-\hat p_x),
\]
and $\lambda_x$ is treated as stop-gradient. No extra correction is applied to this estimate. If the group is all-correct or all-wrong, then $\hat p_x(1-\hat p_x)=0$, so the auxiliary term is automatically inactive.

Unless otherwise stated, the GRPO+SSOPD runs train LoRA adapters with learning rate $5\times10^{-6}$ and effective batch size $32$ and experiments are run on A100 or H100 GPUs. We sample $G=8$ completions per prompt, which keeps each prompt group on one process and makes the correct--wrong pair selection local. The LoRA rank is $r=64$ with $\alpha=128$; the adapted modules are q\_proj, k\_proj, v\_proj, o\_proj, gate\_proj, up\_proj, and down\_proj.

The GRPO rollout configuration uses binary correctness rewards, group reward scaling, KL coefficient $\beta=0$, sampling temperature $1.2$, maximum prompt length $2048$, and maximum rollout length $16000$. The SSOPD regularizer uses $\lambda_0=0.5$, the fixed-teacher setting obtained by disabling trainable adapters during teacher scoring, and the shortest-correct / longest-wrong selection rule. We apply the regularizer to at most the first $K=1024$ generated positions of $y^-$, compute full-vocabulary teacher targets with vocabulary chunk size $8192$, and clip each pointwise distillation summand at $0.05$. Checkpoints are saved every $50$ steps; the HMMT 2025 sweep reports checkpoints up to step $250$.

Evaluation follows the Qwen3 recommended sampling setup with thinking mode enabled: temperature $1.0$, top-p $0.95$, top-k $-1$, min-p $0.0$, no presence penalty, maximum generation length $38{,}912$, and $12$ samples per prompt.

\section{Dataset Licenses and Terms}
\label{sec:dataset_licenses}

Table~\ref{tab:dataset-licenses} summarizes the data assets used for training and evaluation. We checked the public dataset cards and source pages before submission: \url{https://huggingface.co/datasets/siyanzhao/Openthoughts_math_30k_opsd}, \url{https://huggingface.co/datasets/HuggingFaceH4/aime_2024}, \url{https://huggingface.co/datasets/yentinglin/aime_2025}, and \url{https://huggingface.co/datasets/MathArena/hmmt_feb_2025}. The training set is a filtered OpenThoughts math subset released as \texttt{siyanzhao/Openthoughts\_math\_30k\_opsd}; its Hugging Face card does not list a separate license, while the upstream OpenThoughts project and OpenThoughts datasets are released under Apache-2.0. The AIME datasets are used only for evaluation. Their Hugging Face cards link to Art of Problem Solving pages for the problem statements but do not specify an explicit dataset license; we therefore treat them as evaluation-only assets with original contest rights retained by their respective owners. The HMMT February 2025 package from MathArena explicitly lists CC BY-NC-SA 4.0.

\begin{table}[h]
\centering
\caption{Datasets and benchmark assets used in the experiments.}
\label{tab:dataset-licenses}
\small
\begin{tabular}{@{}p{0.21\textwidth}p{0.29\textwidth}p{0.16\textwidth}p{0.25\textwidth}@{}}
\toprule
\textbf{Asset} & \textbf{Source} & \textbf{Use} & \textbf{License / terms} \\
\midrule
OpenThoughts math subset & \texttt{siyanzhao/}\newline\texttt{Openthoughts\_math\_}\newline\texttt{30k\_opsd}; upstream OpenThoughts: \url{https://github.com/open-thoughts/open-thoughts} & Training prompts and reference traces for privileged baselines & Derived dataset card has no separate license field; upstream OpenThoughts artifacts are Apache-2.0. \\
AIME 2024 & \texttt{HuggingFaceH4/aime\_2024}, sourced from AIME I/II 2024 problem pages on Art of Problem Solving & Evaluation only & No explicit license field on the dataset card; original contest/problem rights retained by the original owners. \\
AIME 2025 & \texttt{yentinglin/aime\_2025}, sourced from AIME I/II 2025 problem pages on Art of Problem Solving & Evaluation only & No explicit license field on the dataset card; original contest/problem rights retained by the original owners. \\
HMMT February 2025 & \texttt{MathArena/hmmt\_feb\_2025}; MathArena benchmark collection: \url{https://matharena.ai/} & Evaluation only & CC BY-NC-SA 4.0 as stated on the MathArena Hugging Face dataset card. \\
\bottomrule
\end{tabular}
\end{table}

\section{Additional Ablations}
\label{sec:additional_ablations}

\paragraph{Selector ablation.}
We ablate the choice of selectors by enumerating the Cartesian product between four correct-side selectors and three wrong-side selectors on HMMT 2025. We denote selecting the longest and shortest sequence as $\mathrm{Len}_{\max}$ and $\mathrm{Len}_{\min}$, respectively. $\mathrm{AvgLogP}_{\max}$ and $\mathrm{AvgLogP}_{\min}$ denote selecting the sequence with the maximum and minimum sequence-average log-probability. For each selector pair, we report the best \texttt{Avg@12} across checkpoints.

\begin{table}[t]
\centering
\caption{Ablation of selector choices on HMMT 2025. Each entry reports the best \texttt{Avg@12} across checkpoints.}
\label{tab:selector-ablation}
\vspace{1mm}
\small
\begin{tabular}{@{}lccc@{}}
\toprule
& \multicolumn{3}{c}{\textbf{Wrong-side selector}} \\
\cmidrule(lr){2-4}
\textbf{Correct-side selector}
& $\mathrm{Len}_{\max}$
& $\mathrm{Len}_{\min}$
& $\mathrm{AvgLogP}_{\max}$ \\
\midrule
$\mathrm{Len}_{\max}$
& 42.8 & 44.4 & 44.4 \\
$\mathrm{Len}_{\min}$
& \textbf{45.0} & 44.2 & 44.2 \\
$\mathrm{AvgLogP}_{\max}$
& 42.5 & 42.8 & 42.8 \\
$\mathrm{AvgLogP}_{\min}$
& 42.8 & 42.8 & 42.8 \\
\bottomrule
\end{tabular}
\end{table}

The best selector pair is the one used by SSOPD: the shortest correct completion, $\mathrm{Len}_{\min}$ on the correct side, and the longest wrong completion, $\mathrm{Len}_{\max}$ on the wrong side. This supports the stopping-time view in \S\ref{sec:method}: short correct completions are stronger empirical witnesses of fast success, while long wrong completions identify persistent failure prefixes that benefit from local distillation.

\end{document}